\newcommand{\E}{\mathbb{E}}
\newcommand{\V}{\mathbb{V}}
\newcommand{\R}{\mathbb{R}}
\newcommand{\T}{\top}
\newcommand{\given}{\mid}
\newcommand{\grad}{\nabla}
\newcommand{\evalat}{\bigg\rvert}
\newtheorem{claim}{Claim}
\begin{document}

%
\runningtitle{Can You Trust This Prediction?}

%

\twocolumn[

\aistatstitle{Can You Trust This Prediction?\\Auditing Pointwise Reliability After Learning}

\aistatsauthor{ Peter Schulam \And Suchi Saria }

\aistatsaddress{
  Department of Computer Science\\
  Johns Hopkins University\\
  \texttt{pschulam@cs.jhu.edu}
  \And
  Department of Computer Science\\
  Johns Hopkins University\\
  \texttt{ssaria@cs.jhu.edu}
} ]

\begin{abstract}
  To use machine learning in high stakes applications (e.g. medicine), we need tools for building confidence in the system and evaluating whether it is reliable. Methods to improve model reliability often require new learning algorithms (e.g. using Bayesian inference to obtain uncertainty estimates). An alternative is to \emph{audit} a model after it is trained. In this paper, we describe resampling uncertainty estimation (RUE), an algorithm to audit the pointwise reliability of predictions. Intuitively, RUE estimates the amount that a prediction would change if the model had been fit on different training data. The algorithm uses the gradient and Hessian of the model's loss function to create an ensemble of predictions. Experimentally, we show that RUE more effectively detects inaccurate predictions than existing tools for auditing reliability subsequent to training. We also show that RUE can create predictive distributions that are competitive with state-of-the-art methods like Monte Carlo dropout, probabilistic backpropagation, and deep ensembles, but does not depend on specific algorithms at train-time like these methods do.
\end{abstract}

\vspace{-8pt}
\section{Introduction}
\vspace{-8pt}

Machine learning is quickly becoming an important component of everyday software systems. This is due, in large part, to advances in tools that make it easy to express and train models. Using automatic differentiation software, a programmer can describe the ``forward pass'' of a predictive model and then train the model using gradient-based methods without much additional effort (e.g. \citealt{maclaurin2015autograd,abadi2016tensorflow}). As the barriers to building machine learning systems become lower, there is rising excitement around the idea of applying the technology in high-impact domains (e.g. \citealt{soleimani2018scalable,lipton2018does}).

Tools for quickly building machine learning systems, however, have generally outpaced the growth and adoption of tools to understand whether they are reliable. The average error on heldout training data is a common measure of model performance, but is inadequate for ensuring model reliability. For instance, low average error on heldout training data does not give assurances under \emph{data set shift} (see e.g. \citealt{candela2009dataset}). A model with low average error can also still make large \emph{pointwise errors} on individual predictions for some types of inputs (see e.g. \citealt{nushi2018towards}).

Reliability is commonly addressed during learning. In this paper, we focus on methods that improve reliability by preventing errors at test time (there are other ways to make a system reliable; \citealt{saria2019tutorial}). For instance, we can proactively account for data set shift by using learning algorithms that adjust for possible causes of the difference between train and test distributions (e.g. \citealt{schulam2017reliable,subbaswamy2018counterfactual,subbaswamy2019learning}). To detect pointwise errors, we can use specialized learning algorithms that produce models equipped with uncertainty estimates of predictions (e.g. \citealt{gal2016dropout,lakshminarayanan2017simple}). More generally, we can build confidence in a model by criticizing and refining it (e.g. \citealt{blei2014build,kim2016examples}).

In this paper, we consider the problem of improving model reliability \emph{after learning} with tools that \emph{audit} the predictions of a fixed model. To perform the audit, we introduce the Resampling Uncertainty Estimation (RUE) algorithm to calculate an uncertainty score for each prediction at test time. RUE makes minimal assumptions about the structure of the model and how it was learned. This makes RUE easy to use alongside the rich ecosystem of tools for training models (like automatic differentiation).

The most common approach in machine learning to assessing uncertainty is using Bayesian methods (e.g. \citealt{graves2011practical,blundell2015weight,gal2016dropout}). This approach is tightly coupled to the model structure and the learning algorithm. As a result, Bayesian methods cannot be applied after learning as an auditing tool. Many non-Bayesian methods like the bootstrap \citep{efron1986bootstrap} or deep ensembles \citep{lakshminarayanan2017simple} also depend on specific train-time procedures, and so cannot be applied after learning either.

There are several existing tools for improving model reliability after learning. For instance, \citet{courrieu1994three} computes the convex hull of inputs in the training data and uses it to define a region wherein outcomes can be safely interpolated by a neural network. \citet{bishop1994novelty} generalizes this idea by using density estimation to characterize regions where there are sufficient training samples to make reliable predictions. This approach can be further generalized using ideas from novelty detection where the aim is to characterize the support of a distribution without an explicit generative model (see e.g. \citealt{scholkopf2000support}). Another approach is to fit a second model to errors made by the first, which is related to stacked generalization \citep{wolpert1992stacked}.

One limitation of many existing tools for improving model reliability after learning is that they rely on Euclidean distance, which often ignores model-specific structure. For example, the predictive model may use an internal representation wherein similar points (according to the internal representation) are in fact far away in Euclidean space. This is an issue especially for deep learning, where the empirical successes are thought to be due to rich internal representations of the data that are learned to reflect task-specific invariances \citep{bengio2013representation,farabet2013learning}. Characterizing similarity among inputs for these models is difficult because it is not clear how internal representations can be extracted (see e.g. \citealt{hinton2015distilling,mahendran2015understanding}).

In Section \ref{sec:density-and-local-fit-criteria}, we show that RUE's uncertainty score reflects two important criteria to judge whether a prediction is reliable: the \emph{density criterion} and \emph{local fit criterion} \citep{leonard1992neural}. The density criterion states that a prediction at input $x$ is reliable if there are samples in the training data that are similar to $x$. The local fit criterion states that a prediction at $x$ is reliable if the model has small error on samples in the training data similar to $x$. These criteria hinge upon a definition of similarity, and we show that RUE implements these criteria using a model-dependent inner product to measure similarity.


\textbf{Contributions.} We describe Resampling Uncertainty Estimation (RUE), an algorithm used to audit the pointwise reliability of predictions by computing an uncertainty score. To quantify uncertainty, the RUE algorithm uses the gradients and Hessian of the model's loss on training data (Line 2 in Algorithm 1) and bootstrap samples (Line 7) to produce an ensemble of predictions for a test input (Lines 9-11). Intuitively, this estimates the amount that the prediction would change had the model been fit on different data sets drawn from the same distribution as the training data. This intuition has connections to the classical bootstrap \citep{efron1986bootstrap}, and in Section 3.1 we show that RUE can be derived as an approximation to this procedure that does not require fitting multiple models. We also show that RUE implements the density and local fit criteria using a model-dependent inner product as a measure of similarity. Experimentally, we show that RUE more effectively detects inaccurate predictions than existing methods for improving reliability after learning. Moreover, we show that RUE's uncertainty score can create predictive distributions that are well-calibrated and competitive with state-of-the-art \emph{integrated} methods (i.e. methods like \citealt{hernandez2015probabilistic,gal2016dropout}; and \citealt{lakshminarayanan2017simple}) that estimate uncertainty during learning.

\vspace{-8pt}
\section{Related Work}
\vspace{-8pt}


The dominant approach to modeling uncertainty in machine learning is to use Bayesian inference. Gaussian processes are a flexible class of Bayesian nonparametric priors over functions and provide a natural framework for reasoning about posterior predictive distributions at test time given training data \citep{rasmussen2006gaussian}. Bayesian neural networks are another important class of regression models, but have historically been much more challenging to work with given the nonlinearities in the likelihood function. Early important work on this subject was done by \citet{buntine1991bayesian} and by \citet{mackay1992practical}, who proposed using a Laplace approximation to the posterior distribution in order to obtain posterior predictive distributions and estimates of the evidence (marginal probability of the data) for model selection. A key challenge with this technique is that it requires computing and inverting the Hessian with respect to the network parameters, which can be prohibitively expensive for larger neural networks. There has, however, been recent work in scalable approximations to the Hessian for second-order optimization (e.g. \citealt{martens2015optimizing,botev2017practical}), which could make the Laplace approximation to the posterior feasible \citep{ritter2018scalable}.

Efforts to scale Bayesian inference to modern neural networks have revolved around mean field variational approximations to the posterior distribution (e.g. \citealt{graves2011practical,blundell2015weight}). There has also been work to apply expectation propagation to Bayesian neural networks \citep{hernandez2015probabilistic,hernandez2016black}. Dropout, a non-Bayesian technique for regularizing neural networks \citep{srivastava2014dropout}, was recently shown to approximate a variational approximation of the posterior \citep{gal2016dropout}, and the insight led to an approach for computing posterior predictive distributions using a simple simulation algorithm that leverages existing software for fitting neural networks that implements dropout layers. There has also been recent work to scale MCMC for posterior inference by designing approximate transition operators that trade off between fidelity to the desired stationary distribution and computational cost (e.g. \citealt{welling2011bayesian,ahn2012bayesian}). Although the Bayesian approach is conceptually appealing, it is difficult to place meaningful priors on neural networks (see e.g. \citealt{buntine1991bayesian}).

Many authors have explored alternatives to Bayesian methods for modeling uncertainty in predictive models. Early work by \citet{leonard1992neural} explored heuristic scores that reflect the density and local fit principles. \citet{bishop1994novelty} proposes a technique for identifying ``novel'' inputs as a way to detect when a model is likely to be unreliable. \citet{hooker2004diagnosing} proposes a tree-based density estimator for detecting novel inputs. More recently, \citet{hendrycks2016baseline} proposed a simple heuristic using softmax outputs to detect misclassifications and out-of-distribution samples for classification problems. They also propose an ``abnormality module'' for detecting novel inputs that is similar in spirit to the validity index network proposed by \citet{leonard1992neural}. \citet{guo2017calibration} investigate whether softmax outputs are calibrated estimates of the conditional distribution $p(y \given x)$. They find that the softmax probabilities are not calibrated in large neural networks and propose an approach related to Platt scaling \citep{platt1999probabilistic} to address the issue. Ensembles can also be used to express uncertainties \citep{breiman1996bagging,lakshminarayanan2017simple}.


The bootstrap \citep{efron1986bootstrap} is a classic approach to estimating uncertainty that uses Monte Carlo simulation to estimate the sampling distribution of arbitrary functions of a data set. It cannot be used after learning because we must fit many models at train time rather than just one (this also makes it computationally expensive).
We show that RUE is an approximation of the boostrap by building upon sensitivity analysis ideas pioneered by \citet{cook1986assessment} that were recently extended to modern machine learning problems by \citet{koh2017understanding} in order to shed light on a model's provenance and improve interpretability.

There are several other threads of research related to improving model reliability. One issue that in classification problems is when the current set of labels that a model assigns to inputs is not exhaustive. If this occurs, a model that automatically identifies and learns to classify new categories can make the system more robust (e.g. \citealt{bendale2016towards,liu2018open}). It is also useful for a machine learning system to implement a policy for rejecting an input due to uncertainty. Such a policy can be easy to implement if the model outputs well-calibrated probability distributions (e.g. \citealt{chow1970optimum}), but models can also be trained to directly minimize a risk function incorporating the reject option (e.g. \citealt{herbei2006classification,bartlett2008classification,grandvalet2009support}).

\vspace{-10pt}
\section{Resampling Uncertainty Estimation}
\vspace{-10pt}


We assume the classical supervised learning setup. Let $(x, y)$ denote an input-output pair, where $x$ lies in $\mathcal{X} \subseteq \R^p$ and $y$ lies in $\mathcal{Y} \subseteq \R$. Our learning algorithm is given $n$ training examples $D \triangleq \{ (x_i, y_i) \}_{i=1}^n$, which we assume are drawn iid from an unknown distribution $P$. Let $\mathcal{H}$ denote a hypothesis space with members $f$ indexed by a vector of parameters $\theta \in R^d$. We assume that our algorithm learns by minimizing an objective
\begin{equation}
  \label{eq:objective}
  J_D(\theta) \triangleq \sum_{i=1}^n \ell(y_i, f(x_i, \theta)) + R(\theta),
\end{equation}
where $\ell$ is a loss function and $R(\theta)$ is a regularizer. To apply RUE, we assume that we have access to (1) the matrix $L \in \R^{d \times n}$ where each column $i$ contains the gradient $\grad_\theta \ell(y_i, f(x_i, \hat{\theta})$ and (2) the Hessian $H = \grad^2_\theta J_D(\hat{\theta})$, where $\hat{\theta}$ is the parameter vector returned by the learning algorithm. RUE does not depend on assumptions about how the solution $\hat{\theta}$ is obtained. Moreover, we do not assume that $\hat{\theta}$ is a local or global minimum.

Algorithm \ref{alg:uncertainty-scores} shows the full steps used to compute uncertainty estimates using RUE for a set of $m$ test points $\{x_j\}_{j=1}^m$. Note that RUE must invert the Hessian $H$, but because we do not assume anything about the convergence of the learning algorithm to a local minimum of Equation \ref{eq:objective}, there is no guarantee that $H$ is positive definite. As in \citet{koh2017understanding}, we can add a dampening term $\lambda I$ to make the Hessian invertible, which is equivalent to adding $L_2$ regularization to the learning objective. We denote this adjusted Hessian using $\tilde{H} \triangleq H + \lambda I$, and discuss a strategy to select $\lambda$ in the experiments section below.

\vspace{-5pt}
\begin{algorithm}[H]
  \caption{Resampling Uncertainty Estimation}
  \begin{algorithmic}[1]
    \REQUIRE num. training samples $n$, ensemble size $b$, Hessian of loss $H \in \R^{d \times d}$, loss gradient matrix $L \in \R^{d \times n}$, test cases $\{x_j\}_{j=1}^m$, dampen term $\lambda$.
    \STATE $\tilde{H} \leftarrow H + \lambda * \textsc{Eye}(d)$
    \STATE $A \leftarrow \tilde{H}^{-1} L$
    \STATE $w_0 \leftarrow \textsc{Ones}(n)$
    \STATE $p_0 \leftarrow w_0 / n$
    \STATE $\hat{Y} \leftarrow \textsc{Zeros}(b, m)$
    \FOR{$i=1$ to $b$}
    \STATE $w_i \sim \text{Multinomial}(n, p_0)$
    \STATE $\theta^*_i \leftarrow \hat{\theta} - A (w_i - w_0)$
    \FOR{$j=1$ to $m$}
    \STATE $\hat{Y}[i, j] \leftarrow \textsc{Predict}(\theta^*_i, x_j)$
    \ENDFOR
    \ENDFOR
    \STATE \textbf{return} $\textsc{ColumnWiseVariance}(\hat{Y})$
  \end{algorithmic}
  \label{alg:uncertainty-scores}
\end{algorithm}
\vspace{-5pt}

\subsection{RUE as Approximation to the Bootstrap}
\label{sec:derivation-from-bootstrap}

In this section, we connect RUE to Efron's bootstrap procedure. The bootstrap approximates the sampling distribution of an estimate by repeatedly simulating \emph{bootstrap samples}, which are new data sets of size $n$ created by sampling with replacement from the uniform distribution over the original data set. To bootstrap a supervised learning algorithm, one would need to sample $b$ bootstrap datasets and run the learning procedure from scratch each time. When learning on the original sample, let $\hat{\theta}$ denote the learned parameters that satisfy
\begin{equation}
  \grad J_D(\hat{\theta})
  = \sum_{i=1}^n \grad_\theta \ell_i(\hat{\theta}) +
  \grad_\theta R(\hat{\theta}) = C
\end{equation}
for some constant $C$, where we have used $\ell_i(\hat{\theta})$ as shorthand for $\ell(y_i, f(x_i, \hat{\theta}))$.  Let $w_i$ denote a multiplicative weight for the $i^{\text{th}}$ training sample in the objective function. When $w_i = 1$ for all $i$, then the objective can be rewritten to depend on the weights
\begin{equation}
  J_D(\theta ; w) = \sum_{i=1}^n w_i \ell_i(\theta) + R(\theta).
\end{equation}
The bootstrap is equivalent to choosing new weights $w_i$ that count the number of times a particular example is included in the bootstrap sample and then refitting the objective. Considering our objective function as a function of both the model parameters $\theta$ and sample weights $w_0 \triangleq \{w_i\}$, the following equality holds at the solution $\hat{\theta}$:
\begin{equation}
  \grad_\theta J_D(\hat{\theta} ; w_0) \triangleq K_D(\hat{\theta} ; w_0) = L
  w_0 + \grad_\theta R(\hat{\theta}) = C.
\end{equation}
As a reminder, $L \in \R^{d \times n}$ is a matrix where the $i^{\text{th}}$ column is the gradient of the loss of the $i^{\text{th}}$ sample $\grad_\theta \ell_i(\hat{\theta})$, and $w_0 \in \R^n$ is a vector of ones.

If we assume that $K_D(\hat{\theta} ; w_0)$ is smooth in $\theta$ (this amounts
to assuming that the loss and regularizer are twice continuously
differentiable), we can use the implicit function theorem to conclude that there
exists a local function such that $\phi(w_0) = \hat{\theta}$. In other words,
there is a map from the sample weights to learned model parameters. If we knew
this map, then the bootstrap procedure would be straightforward: sample new
weights $w^*$ and map them to the parameters $\phi(w^*) = \theta^*$. Our
approach approximates this function using a first-order Taylor expansion around
$\hat{\theta}$, which means we can estimate the parameters we would obtain with
a bootstrap sample $w^*$ as
\begin{equation}
  \theta^* \approx \hat{\theta} +
  \frac{\partial \phi}{\partial w} \evalat_{w_0} (w^* - w_0)
  = \hat{\theta} - H^{-1} L (w^* - w_0),
\end{equation}
where the equality above also follows from the implicit function theorem (recall
that $H$ is the Hessian of the objective $J_D$ evaluated at $\hat{\theta}$). The
Jacobian of $\phi$ can be computed once (this is the matrix $A$ in Algorithm
\ref{alg:uncertainty-scores}), then sampling $\theta^*$ only requires drawing
new weights $w^*$ and a matrix multiplication. To approximate the sampling
distribution of the model's prediction at a new point $x^*$, we compute the
predictions made by a collection of models with parameters
$\{\theta^*_i\}_{i=1}^b$.

\subsection{Density and Local Fit Criteria}
\label{sec:density-and-local-fit-criteria}

\newcommand{\rue}[1]{\hat{\sigma}_{\text{RUE}}(#1)}
\newcommand{\ruesq}[1]{\hat{\sigma}^2_{\text{RUE}}(#1)}

\begin{figure*}[t]
  \vskip 0.2in
  \begin{center}
    \includegraphics[width=0.8\linewidth]{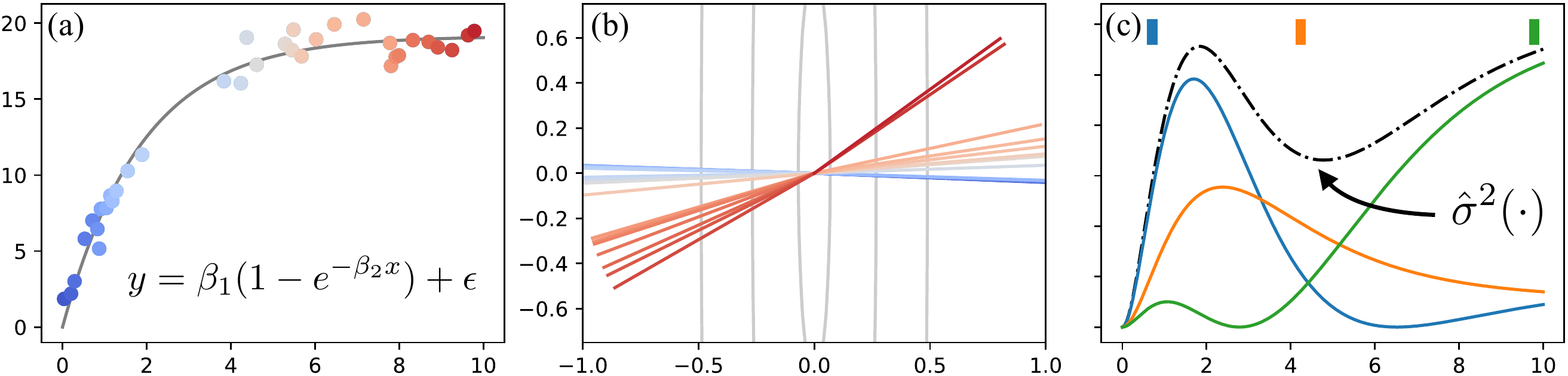}
    \vskip -0.05in
    \caption{Simulated data demonstrating the model-dependent density and local
      fit principles for Resampling Uncertainty Estimation (RUE).}
    \label{fig:density-illustration}
  \end{center}
  \vskip -0.2in
\end{figure*}

RUE outputs uncertainty scores $\rue{x}$ by computing the standard deviation of
predictions sampled within the loop on lines 6-12 in Algorithm
\ref{alg:uncertainty-scores}. In this section, we argue that these uncertainty
scores check both the density and local fit criteria.  Moreover, we show
that RUE uses a model-dependent similarity measure. For this analysis, we will
assume that the loss $\ell$ is the negative log likelihood of an exponential
family:
\begin{equation}
  \ell(y_i, f)= -\log h(y_i) - t(y_i) f + a(f),
\end{equation}
where $h(y_i)$ is the base measure of the exponential family, $t(y_i)$ is the
natural sufficient statistic, the predicted value $f$ is the natural parameter,
and $a(f)$ is the log normalizing constant. Our argument rests upon the
following claim.
\begin{claim}
  As the number of training samples $n$ grows, we can approximate $\ruesq{x}$ as
  \begin{equation}
    \label{eq:rue-approximation}
    \ruesq{x} \approx
    \sum_{i=1}^n r_i^2 k_{\text{RUE}}(x, x_i),
  \end{equation}
  where $r_i = \E[t(y_i) \given f] - t(y_i)$ and $k_{\text{RUE}}$ is a positive
  definite kernel function with definition
  \begin{equation}
    \label{eq:rue-kernel}
    k_{\text{RUE}}(x_1, x_2) =
    \left[
      \grad^\T_\theta f(x_1, \hat{\theta})
      \tilde{H}^{-1}
      \grad_\theta f(x_2, \hat{\theta})
    \right]^2.
  \end{equation}
\end{claim}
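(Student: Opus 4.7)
The plan is to linearize everything in sight: use the first-order expansion of $\phi$ already derived in Section~\ref{sec:derivation-from-bootstrap} to replace $\theta^*_i - \hat\theta$ by $-\tilde H^{-1} L (w_i - w_0)$, then Taylor-expand the prediction around $\hat\theta$ to first order, $f(x,\theta^*_i) - f(x,\hat\theta) \approx \grad_\theta^\T f(x,\hat\theta)(\theta^*_i - \hat\theta)$. This reduces the sampling distribution of the prediction to a linear functional of $w_i - w_0$, so its variance (which is exactly $\ruesq{x}$ in the large-$b$ limit) is a quadratic form in the multinomial covariance.

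Concretely, setting $v \triangleq \tilde H^{-1} \grad_\theta f(x,\hat\theta)$ and using $\mathrm{Cov}(w_i) = I - \tfrac{1}{n}\mathbf{1}\mathbf{1}^\T$ for $w_i \sim \mathrm{Multinomial}(n, \mathbf{1}/n)$, the variance becomes
\begin{equation*}
v^\T L \Bigl(I - \tfrac{1}{n}\mathbf{1}\mathbf{1}^\T\Bigr) L^\T v
= \sum_{i=1}^n \bigl(v^\T \grad_\theta \ell_i(\hat\theta)\bigr)^2
- \tfrac{1}{n}\bigl(v^\T L\mathbf{1}\bigr)^2 .
\end{equation*}
Then I would plug in the exponential-family form of the loss: by the chain rule, $\grad_\theta \ell_i(\hat\theta) = \partial_f \ell(y_i,f_i)\, \grad_\theta f(x_i,\hat\theta)$, and differentiating $-\log h(y_i) - t(y_i) f + a(f)$ with respect to $f$ gives $a'(f_i) - t(y_i) = \E[t(y_i)\mid f_i] - t(y_i) = r_i$. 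Substituting this into each term of the first sum produces exactly $r_i^2\,k_{\text{RUE}}(x,x_i)$, matching \eqref{eq:rue-approximation}.

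The main obstacle is justifying that the second, ``centering'' term $\tfrac{1}{n}(v^\T L\mathbf{1})^2$ is negligible compared to the leading sum as $n\to\infty$. Under standard regularity (bounded per-sample loss Hessians and a regularizer with $\grad_\theta R = O(1)$), the full Hessian $\tilde H$ scales like $n$, so $v = O(1/n)$; meanwhile $L\mathbf{1} = \sum_i \grad_\theta \ell_i(\hat\theta)$ is controlled by $\grad_\theta R(\hat\theta)$ whenever $\hat\theta$ sits near a stationary point, hence $O(1)$. The leading sum is $\Theta(1/n)$ while the centering term is $O(1/n^3)$, so the ratio vanishes. The delicate part is that the claim as stated does not assume $\hat\theta$ is a stationary point; I would therefore make explicit the mild assumption that $\|\sum_i \grad_\theta \ell_i(\hat\theta)\|$ grows slower than $\sqrt{n}$, which is automatic at any approximate minimizer of the regularized objective.

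A secondary technical point is the accuracy of the two linearizations (of $\phi$ at $w_0$ and of $f(x,\cdot)$ at $\hat\theta$): the bootstrap perturbation $\theta^*_i - \hat\theta$ is driven by $\tilde H^{-1} L(w_i - w_0)$ whose entries are $O_p(1/\sqrt n)$ under the same scaling, so both Taylor remainders are $o_p$ of the leading order and may be absorbed into the approximation symbol ``$\approx$''. This argument is parallel to the one used to justify the bootstrap-via-influence-functions approximation, so I would import those regularity conditions directly rather than rederive them.
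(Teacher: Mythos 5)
Your proposal is correct and follows essentially the same route as the paper's proof: linearize $f(x,\cdot)$ at $\hat\theta$, push the multinomial covariance of $w$ through the quadratic form $\grad_\theta^\T f\,\tilde H^{-1} L\,\V[w]\,L^\T \tilde H^{-1}\grad_\theta f$, drop the centering correction as $n\to\infty$, and use the chain rule on the exponential-family loss to identify $r_i$. Your treatment is in fact slightly more careful than the paper's --- you state the multinomial covariance correctly as $I - \tfrac{1}{n}\mathbf{1}\mathbf{1}^\T$ (the paper's ``$\nicefrac{n}{n-1}$ along the diagonal'' is a typo for $\nicefrac{n-1}{n}$) and you make explicit the rate argument for why the $\tfrac{1}{n}(v^\T L\mathbf{1})^2$ term is negligible, which the paper leaves implicit in ``$\V[w]\to I$''.
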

\begin{proof}
  Let $\theta^*$ denote a parameter vector resampled according to lines 7-8 in
  Algorithm \ref{alg:uncertainty-scores}, then
  \begin{align}
    \ruesq{x}
    &= \V[ f(x, \theta^*) ] \\
    &\approx \V[
      f(x, \hat{\theta}) +
      \grad^\T_\theta f(x, \hat{\theta}) (\theta^* - \hat{\theta})
    ] \\
    &= \grad^\T_\theta f(x, \hat{\theta}) \V[ \theta^* ]
    \grad^\T_\theta f(x, \hat{\theta})
  \end{align}
  To compute the variance of $\theta^*$, recall that $\theta^* = \hat{\theta} -
  \tilde{H}^{-1} L (w - w_0)$, where $w \sim \text{Multinomial}(n, p_0)$, $p_0
  \in \R^n$ with all entries $n^{-1}$, and $w_0 \in R^n$ is a vector of
  ones. The covariance matrix is therefore $\V[\theta^*] = \tilde{H}^{-1} L
  \V[w] L^\T \tilde{H}^{-1}$. Because $w$ is a multinomial random variable, its
  covariance matrix is $\nicefrac{n}{n-1}$ along the diagonal, and
  $-\nicefrac{1}{n}$ on the off diagonal. Therefore, as the number of samples $n
  \to \infty$ we have $\V[w] \to I$, where $I$ is the identity matrix.  Now,
  recall that $L \in \R^{d \times n}$ is a matrix where column $i$ contains
  $\grad_\theta \ell(y_i, f(x_i, \hat{\theta})$. Using the chain rule, we can
  write $L = G R$, where $G \in \R^{d \times n}$ is a matrix where column $i$
  contains $\grad_\theta f(x_i, \hat{\theta})$ and $R \in \R^{n \times n}$ is a
  diagonal matrix where the $i^{\text{th}}$ diagonal is $r_i \triangleq \grad_f
  \ell(y_i, f(x_i, \hat{\theta}) = \E[t(y_i) \given f] - t(y_i)$.\footnote{The
    last equality follows from $\grad_f a(f) = \E[t(y) \given f]$}
  As $n$ gets large, we have
  \begin{align*}
    \ruesq{x}
    &\approx \grad^\T_\theta f(x, \hat{\theta}) \tilde{H}^{-1} L
    L^\T \tilde{H}^{-1} \grad_\theta f(x, \hat{\theta}) \\
    &= \grad^\T_\theta f(x, \hat{\theta}) \tilde{H}^{-1} G R^2
    G^\T \tilde{H}^{-1} \grad_\theta f(x, \hat{\theta}).
  \end{align*}
  Definine the kernel as in \ref{eq:rue-kernel} and write the matrix
  expression as a sum of scalars to complete the argument.
\end{proof}

Note that the approximation of $\ruesq{x}$ resembles a kernel smoothing estimate of the exponential family residual at a test input $x$. Interpeting the kernel function $k_{\text{RUE}}(x, x_i)$ as a measure of similarity between the test point $x$ and training point $x_i$, we can see that $\rue{x}$ reflects the local fit criterion. In other words, the uncertainty score will be smaller (larger) if the model's predictions have small (large) errors on the training samples that are similar to $x$.

We can also use Equations \ref{eq:rue-approximation} and \ref{eq:rue-kernel} to show that RUE implements the density criterion. Let $\mathcal{N}(x)$ denote the set of influential neighbors of $x$; i.e. those with the highest similarity that contribute most to the uncertainty score $\rue{x}$. In Equation \ref{eq:rue-kernel}, we see that the similarity $k_{\text{RUE}}(x, x_i)$ depends on the gradients $\grad_\theta f(x, \hat{\theta})$ and $\grad_\theta f(x_i, \hat{\theta})$, but also on the dampened Hessian $\tilde{H} \triangleq H + \lambda I$. In particular, we see that the gradients will be projected into the space spanned by the eigenvectors of $\tilde{H}$, and then scaled by the corresponding inverse eigenvalues. The eigenvectors with largest eigenvalues will point in directions with high curvature, implying that the training data strongly determine the model parameters along that direction. We see that $\rue{x}$ will be smaller if $x$'s gradient and those of its neighbors $\mathcal{N}(x)$ lie along a direction with high curvature. If we consider curvature to be a proxy for density, then we see that RUE also implements a form of the density criterion.

Finally, we note that the similarity metric $k_{\text{RUE}}(x, x_i)$ depends on
both the structure of the model and the learned parameter values $\hat{\theta}$
through the gradient and Hessian. We therefore see that RUE uses a measure of similarity that is model-dependent for both the local fit and density criteria, which is a distinguishing feature among tools for improving reliability after learning.

\subsubsection{Simulated Data Illustration}

To make the connection between RUE and the density and local fit criteria
concrete, we use a small simulated data set shown in Figure
\ref{fig:density-illustration}. We consider a 1-dimensional nonlinear regression
problem, where the mean function is shown in Figure
\ref{fig:density-illustration}a (grey line) and is defined as $\E[y \given x] =
\beta_1 (1 - \exp^{-\beta_2 x})$. The data are normally distributed around the
mean and are shown as points in the same panel. In Figure
\ref{fig:density-illustration}b, we plot the directions of the gradient of the
likelihood $\grad_\theta \ell(y_i, f(x_i, \hat{\theta}))$ evaluated at the MLE
for each of the training samples projected on to the eigenvectors of the
Hessian. The grey contours show the shape of the quadratic defined by the
Hessian in this rotated space. The colors of the points in Figure
\ref{fig:density-illustration}a and of the directions in Figure
\ref{fig:density-illustration}b show the correspondence between training samples
and their gradients. We highlight two observations about panel (b). First, the
points whose gradient vectors lie along similar directions are not necessarily
close on the x-axis in panel (a). The blue and tan points, for example, have
similar gradient vectors but the tan points are closer to the red than to the
blue points in panel (a). Second, the gradient vectors lie along directions with
high curvature (i.e. moving left to right in panel (b)), suggesting that
curvature does indeed reflect the density principle in this space. Finally, in
Figure \ref{fig:density-illustration}c we show how the model-dependent kernel
and error terms $r_i^2$ are smoothed to produce an uncertainty estimate. Each
summand in Equation \ref{eq:rue-approximation} is an error $r_i^2$ weighted by a
basis function $k_{\text{RUE}}(\cdot, x_i)$. Panel (c) plots these basis
functions for three samples from the training data (colored curves, location of
training sample is indicated by colored tick at the top of the plot) and also
shows the uncertainty score $\ruesq{\cdot}$ as a function of $x$ (dotted
curve). Higher values of the basis functions reflect points for which the
associated training sample is considered relevant.

\vspace{-8pt}
\subsubsection{Related Uncertainty Scores}
\vspace{-4pt}

\paragraph{Laplace approximation.}
As a point of comparison, the variance of $\theta$ under the Laplace approximation to the posterior is $\V[\theta] = [\grad_\theta^2   J_D(\hat{\theta})]^{-1} = H^{-1}$. If we linearize the model at $\hat{\theta}$ we can approximate the posterior predictive variance at a test point $x$ as $\hat{\sigma}^2_{\text{Laplace}}(x) \approx \grad^\T_\theta f(x, \hat{\theta}) \tilde{H}^{-1} \grad_\theta f(x, \hat{\theta})$. We see that the Laplace approximation measures uncertainty using the norm of the test point according to a model-specific inner product. The score is small when the gradient of the input points in a direction of high curvature, and so reflects the density criteria, but not local fit.

\vspace{-8pt}
\paragraph{Sandwich covariance estimator.}

RUE has connections to robust statistics. When the learning objective is the log likelihood of independent samples, the negative Hessian of the objective is known as the Fisher information matrix, and it is often used to approximate the covariance matrix under the sampling distribution of a maximum likelihood estimator \citep{tibshirani1996comparison}. In likelihood theory, an alternative estimator of the covariance matrix of the parameters that is robust to model misspecification is the sandwich covariance estimator \citep{kent1982robust}, which is $g^\T(x) H^{-1} L L^\T H^{-1} g(x)$.
Recall that we arrived at a similar expression in Equation \ref{eq:rue-approximation} by letting $n \to \infty$. This draws a connection between RUE and robust statistics \citep{huber2009robust}. In our experiments, we find that RUE outperforms the Laplace approximation and these connections to robust statistics may help to develop theory that explains this observation.


\subsubsection{Scalability}
\label{sec:scalability}

One concern with RUE is that it may be too computationally demanding to use in practice. In particular, the algorithm requires us to store the Hessian (requiring $O(d^2)$ space), invert the Hessian (requiring $O(d^3)$ operations), and store all gradients of the training data (requiring $O(nd)$ space). We address the issue related to storing all gradients of the training data by noting that it may be possible to retain only a small fraction of the original data set and still obtain high quality reliability scores. Note that in step 8 of Algorithm \ref{alg:uncertainty-scores}, we sample a parameter using the update $\theta^*_i \leftarrow \hat{\theta} - A(w_i - w_0)$,
where $A = \tilde{H}^{-1} L$. This sampling step is a Newton-Raphson update using a resampled data set. Just as there are diminishing returns in terms of accuracy when computing stochastic gradient updates with larger minibatches \citep{bousquet2008tradeoffs}, there may be a similar relationship between the amount of training data retained and the quality of the reliability scores computed using RUE. In our experiments, we consider smaller data sets and so leave this extension for future work.

To address storage and computation issues related to the Hessian, we note that we do not need to explicitly store the Hessian so long as we have access to the model code and can run backwards-mode automatic differentiation, which allows us to efficiently compute Hessian-vector products \citep{pearlmutter1994fast}. Moreover, we can perform the required inversion approximately using conjugate gradients (which only requires Hessian-vector products), or can use more sophisticated techniques discussed by \citet{koh2017understanding}. Alternatively, we could use recently developed approximations to the Hessian such as K-FAC \citep{martens2015optimizing}. We leave exploration and evaluation of these approximations for future work.

\begin{figure*}[ht]
  \vskip 0.2in
  \begin{center}
    \includegraphics[width=0.85\linewidth]{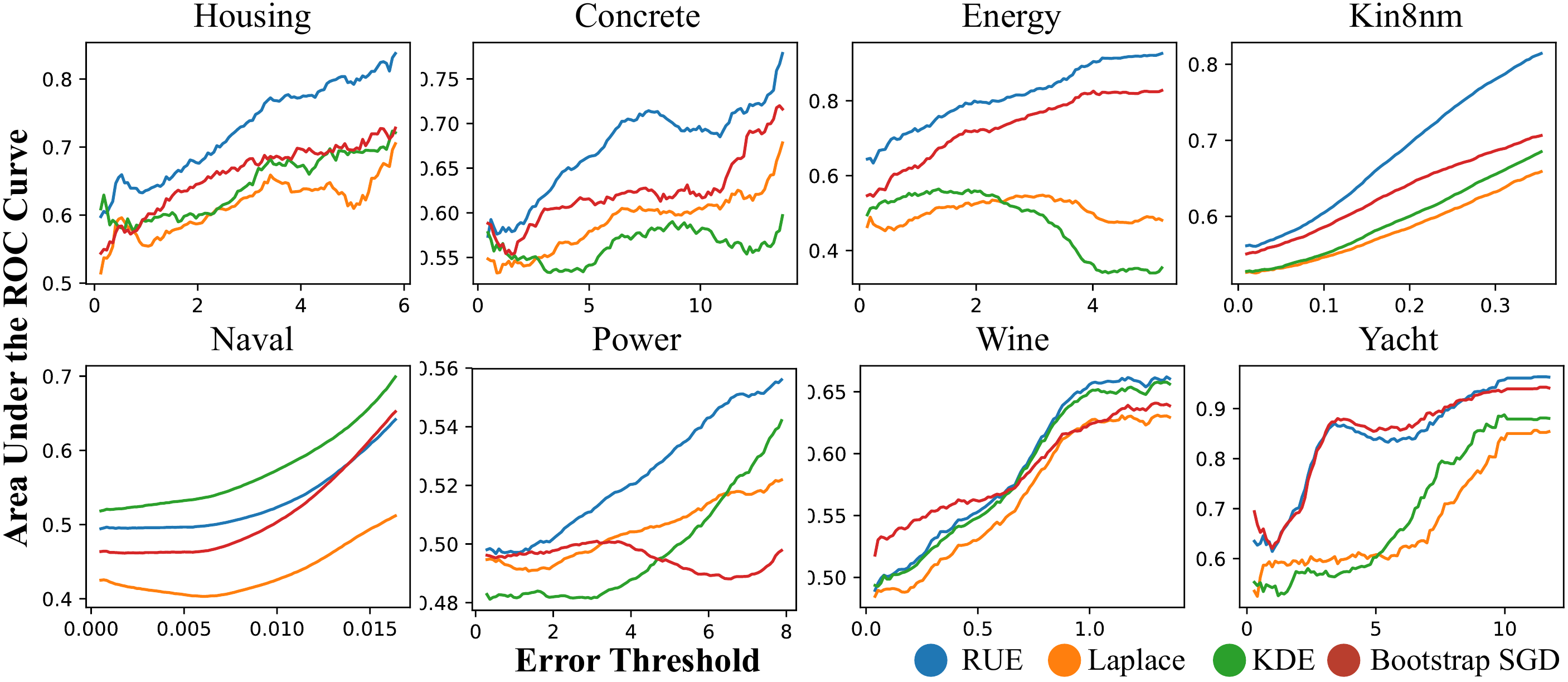}
    \vskip -0.1in
    \caption{Comparison of uncertainty scores across eight benchmark regression
      data sets. Each figure plots area under the ROC curve as a function of the
      error threshold used to define correct and incorrect predictions (curves
      that grow higher and faster from left to right are better).}
    \label{fig:results}
  \end{center}
  \vskip -0.2in
\end{figure*}

\vspace{-5pt}
\section{Experiments}
\vspace{-5pt}

\textbf{Overview.} We first evaluate RUE on an auditing task where the goal is to detect when model predictions will be wrong. We compare to existing methods for improving reliability after learning, and show that RUE generally outperforms them. Next, we show that RUE can create predictive distributions over outputs that are competitive with state-of-the-art \emph{integrated} methods that rely on particular algorithms at train time (e.g. Bayesian inference).

\vspace{-6pt}
\subsection{Error Detection}
\vspace{-6pt}

In our first experiment, we compare how well RUE and three baseline methods for improving reliability after learning are able to detect when a model's prediction will be far from the true value. The first alternative is the Laplace approximation \citep{mackay1992practical}, which uses the curvature of the learning objective as the precision matrix of a multivariate normal over the model parameters. We draw samples from this multivariate normal distribution and compute an ensemble of predictions to obtain a predictive distribution.
The second baseline uses a kernel density estimator (KDE) to estimate the marginal distribution over inputs in the training sample. The uncertainty score of a test prediction is the negative density of the input; test cases that have higher density lie closer to the training data in Euclidean space and so should be more certain according to the density principle. We use an isotropic Gaussian kernel for the KDE, which depends on a single bandwidth parameter that we estimate using 5-fold cross validation on the training data. Finally, we compare to a method similar to RUE but that uses only first-order rather than second-order information. This baseline computes an ensemble as in Algorithm \ref{alg:uncertainty-scores} but on Line 8 the new parameter $\theta^*_i$ is set to $\theta - \eta L w_i$, which is the parameter vector obtained by taking a single gradient step using a bootstrapped sample with step size $\eta$. We use the step size $\eta = 0.001$, which is the base learning rate during training (more information is in paragraph \textbf{Model} below). We refer to this baseline as Bootstrap SGD.
Both RUE and the Laplace approximation must invert the Hessian. To choose a dampening factor $\lambda$, we compute the eigenvalues of $H$ using \textsc{LAPACK}'s \texttt{syevd} routine and choose $\lambda$ so that the smallest eigenvalue of $\tilde{H}$ is at least one.

\begin{table*}[t]
\fontsize{8.5}{10.2}\selectfont
\centering

\caption{Negative log likelihoods for integrated and after-learning methods. Details are in Section \ref{sec:calibration}}.
\begin{tabular}{l | rrr | rrr}
                  & \multicolumn{3}{c}{Integrated Methods}                      & \multicolumn{3}{c}{Subsequent-to-Training Methods}                              \\
\textbf{Data Set} & \textbf{PBP} & \textbf{MCDropout} & \textbf{Deep Ensembles} & \textbf{RUE}          & \textbf{Laplace} & \textbf{Bootstrap SGD} \\
		  \hline
Housing           & 2.57 (0.09)  & 2.46 (0.25)        & 2.39 (0.25)             & \textbf{2.54 (0.06)}  & 4.12 (0.01)      & 2.55 (0.04)            \\
Concrete          & 3.16 (0.02)  & 3.04 (0.09)        & 3.05 (0.15)             & \textbf{3.36 (0.02)}  & 4.09 (0.01)      & 3.40 (0.01)            \\
Energy            & 2.04 (0.02)  & 1.99 (0.09)        & 1.40 (0.17)             & \textbf{2.27 (0.02)}  & 3.66 (0.01)      & 2.32 (0.01)            \\
Kin8nm            & -0.90 (0.01) & -0.95 (0.03)       & -1.19 (0.02)            & \textbf{-0.34 (0.00)} & 0.11 (0.00)      & -0.32 (0.00)           \\
Naval             & -3.73 (0.01) & -3.80 (0.05)       & -4.17 (0.05)            & \textbf{-3.33 (0.00)} & -2.74 (0.00)     & -3.01 (0.00)           \\
Power             & 2.84 (0.01)  & 2.80 (0.05)        & 2.78 (0.04)             & \textbf{2.88 (0.00)}  & 4.51 (0.00)      & 2.90 (0.00)            \\
Wine              & 0.97 (0.01)  & 0.93 (0.06)        & 0.93 (0.12)             & \textbf{1.01 (0.00)}  & 1.47 (0.00)      & \textbf{1.01 (0.00)}   \\
Yacht             & 1.63 (0.02)  & 1.55 (0.12)        & 1.18 (0.19)             & 3.02 (0.05)           & 3.88 (0.01)      & \textbf{2.99 (0.04)}
\end{tabular}
\label{tab:nll}
\end{table*}

\vspace{-0.1in}
\paragraph{Data.}
We run our experiments on eight common benchmark regression data sets
(e.g. \citealt{hernandez2015probabilistic,gal2016dropout,lakshminarayanan2017simple}). The data sets we use are: Boston housing, concrete compression strength, energy efficiency, robot arm kinematics, naval propulsion, combined cycle power plant, red wine quality, and yacht hydrodynamics.\footnote{These data sets are all   available from \texttt{https://archive.ics.uci.edu/ml/index.php}} For each data set, we sample twenty train-test splits. For the smaller data sets (housing, concrete, energy, and yacht), we sample $90\%$ of the points for training. The remaining data sets are larger and so we use smaller percentages of the data set for each training split in order to simulate a small data regime, where uncertainty scores are most useful. For the larger data sets we sample approximately $600$ training points per split and test on the remaining points.

\vspace{-0.1in}
\paragraph{Model.}
To control for model structure, learning objective, and optimization algorithm across all data sets and uncertainty estimation algorithms, we fix a single feedforward neural network architecture with a single hidden layer comprised of 50 hidden units, softplus nonlinearities, and $L_2$ loss. We regularize with weight decay and set the penalty $\alpha = 1$. Moreover, we use a single set of learning hyperparameters: we learn for a total of $500$ epochs with $128$ samples per minibatch, and use the default settings ($b_1 = 0.9, b_2 = 0.999$) of \textsc{Adam} \citep{kingma2014adam} with a fixed learning rate of $0.001$. We fit a single model for each split of each data set, and compute uncertainty scores for the predictions made by that model on the test set using the four alternative uncertainty score methods (RUE, Laplace, KDE, and Bootstrap SGD).

\vspace{-0.1in}
\paragraph{Evaluation.}
Let $y \in \R$ denote the true output for input $x$ and let $\hat{y} \in \R$ be the model's prediction. For a given error tolerance $\tau$, our goal is to use uncertainty scores to detect whether the prediction will be correct ($|y - \hat{y}| \leq \tau$) or incorrect ($|y - \hat{y}| > \tau$). We use the area under the ROC curve (AUC) to evaluate each uncertainty score's ability to detect incorrect predictions across all test splits for a given data set. Figure \ref{fig:results} shows AUC as a function of error tolerance $\tau$ for each data set and for each uncertainty score. We sweep across thresholds between the $5^{\text{th}}$ and $95^{\text{th}}$ percentiles of the empirical distribution over absolute errors.

\vspace{-0.1in}
\paragraph{Results.}
RUE is better at detecting errors than Laplace, KDE, and Bootstrap SGD for most data sets and most error tolerances. Note that the range of AUC scores varies widely across data sets. For instance, the RUE scores are able to discriminate well across all thresholds in the energy data set. The power data set, however, is challenging; all AUCs are $< 0.56$.

\vspace{-5pt}
\subsection{RUE Predictive Distributions}
\label{sec:calibration}
\vspace{-5pt}

Existing methods for quantifying uncertainty output probabilities. Although the methods we compare to in the first experiments above do not output probability distributions, there is a natural way to create one for the methods that use ensembles of model predictions (RUE, Laplace, and Bootstrap SGD). When the observation noise is additive---that is, $y = f(x) + \epsilon$---the posterior variance of a prediction at input $x$ for a Bayesian model is the posterior variance of $f(x)$ plus the variance of the observation noise $\epsilon$. For the ensemble methods, the uncertainty scores can be seen as approximations of the variance of $f(x)$, which we denote using $\hat{\sigma}^2(x)$. We can also estimate the variance of $\epsilon$ using the residuals on the training data, which we denote using $\hat{\nu}^2$. A natural predictive distribution is a Gaussian centered at the prediction $f(x)$ under the learned model parameters $\hat{\theta}$ with variance $\hat{\sigma}^2(x) + \hat{\nu}^2$. We use this predictive distribution to compute negative log likelihoods (NLL) and compare the methods used after learning to state-of-the-art alternatives that estimate uncertainty during training using specialized objectives (\citealt{gal2016dropout,hernandez2015probabilistic,lakshminarayanan2017simple}), which we refer to as \emph{integrated} methods.

Table \ref{tab:nll} shows the results of this experiment. The negative log likelihoods for the integrated methods are the results reported in Table 1 of \citet{lakshminarayanan2017simple}. We find that RUE has better NLL than Laplace and Bootstrap SGD on all but two data sets. The improvements over Bootstrap SGD, however, are small, but recall that RUE generally outperformed Bootstrap SGD in the detection experiments shown in Figure \ref{fig:results}. We also compare RUE to three integrated methods: Probabilistic Backpropagation \citep{hernandez2015probabilistic}, Monte Carlo Dropout \citep{gal2016dropout}, and Deep Ensembles \citep{lakshminarayanan2017simple}. We find that RUE is competitive with these methods (e.g., on Housing, Power and Wine), but does not outperform them. We are not surprised by this result because integrated methods jointly fit predictions and uncertainty using a specialized objective. Furthermore, we suspect that RUE's negative log likelihood would be further improved if the learning rates and regularization weights were tuned using cross validation for each data set (these choices were fixed upfront at a single setting for all data sets in order to control for effects of learning algorithms).

\vspace{-12pt}
\section{Conclusion}
\vspace{-12pt}


We introduced RUE, an algorithm that audits a model by estimating the uncertainty of predictions without changing the learning algorithm. This makes it easy to use RUE alongside the rich ecosystem of existing tools for training machine learning models (e.g. automatic differentiation libraries). This is in contrast to methods like Bayesian inference that depend on specific learning algorithms to estimate uncertainty. By making it easier to estimate uncertainty, we suspect that after-learning auditing methods like RUE will help to increase the adoption of machine learning in high-stakes domains such as medicine.
We evaluated RUE on an error detection task, and showed that it outperforms existing tools for improving model reliability after learning. Moreover, we showed that RUE generates predictive distributions that are competitive with state-of-the-art integrated methods that use specialized training algorithms to quantify uncertainty (e.g. Bayesian inference). We also showed that RUE's uncertainty score reflects the density and local fit criteria using a model-dependent similarity measure. In future work we can define new audits by leveraging different principles. For instance, methods for interpreting model predictions (e.g. \citealt{ribeiro2016should}) or performing sensitivity analyses (e.g. \citealt{koh2017understanding}) after learning may inspire new criteria for reliability that could be integrated into different auditing tools.

\bibliographystyle{plainnat}
\bibliography{main}

\begin{thebibliography}{57}
\providecommand{\natexlab}[1]{#1}
\providecommand{\url}[1]{\texttt{#1}}
\expandafter\ifx\csname urlstyle\endcsname\relax
  \providecommand{\doi}[1]{doi: #1}\else
  \providecommand{\doi}{doi: \begingroup \urlstyle{rm}\Url}\fi

\bibitem[Abadi et~al.(2016)Abadi, Barham, Chen, Chen, Davis, Dean, Devin,
  Ghemawat, Irving, Isard, et~al.]{abadi2016tensorflow}
M.~Abadi, P.~Barham, J.~Chen, Z.~Chen, A.~Davis, J.~Dean, M.~Devin,
  S.~Ghemawat, G.~Irving, M.~Isard, et~al.
\newblock Tensorflow: a system for large-scale machine learning.
\newblock In \emph{OSDI}, volume~16, pages 265--283, 2016.

\bibitem[Ahn et~al.(2012)Ahn, Korattikara, and Welling]{ahn2012bayesian}
S.~Ahn, A.~Korattikara, and M.~Welling.
\newblock Bayesian posterior sampling via stochastic gradient {Fisher} scoring.
\newblock In \emph{International Conference on Machine Learning (ICML)}, 2012.

\bibitem[Bartlett and Wegkamp(2008)]{bartlett2008classification}
P.L. Bartlett and M.H. Wegkamp.
\newblock Classification with a reject option using a hinge loss.
\newblock \emph{Journal of Machine Learning Research}, 9\penalty0
  (Aug):\penalty0 1823--1840, 2008.

\bibitem[Bendale and Boult(2016)]{bendale2016towards}
A.~Bendale and T.E. Boult.
\newblock Towards open set deep networks.
\newblock In \emph{Proceedings of the IEEE conference on computer vision and
  pattern recognition}, pages 1563--1572, 2016.

\bibitem[Bengio et~al.(2013)Bengio, Courville, and
  Vincent]{bengio2013representation}
Y.~Bengio, A.~Courville, and P.~Vincent.
\newblock Representation learning: A review and new perspectives.
\newblock \emph{IEEE Transactions on Pattern Analysis and Machine
  Intelligence}, 2013.

\bibitem[Bishop(1994)]{bishop1994novelty}
C.M. Bishop.
\newblock Novelty detection and neural network validation.
\newblock \emph{IEE Proceedings of Vision, Image and Signal processing}, 1994.

\bibitem[Blei(2014)]{blei2014build}
D.M. Blei.
\newblock Build, compute, critique, repeat: Data analysis with latent variable
  models.
\newblock \emph{Annual Review of Statistics and Its Application}, 2014.

\bibitem[Blundell et~al.(2015)Blundell, Cornebise, Kavukcuoglu, and
  Wierstra]{blundell2015weight}
C.~Blundell, J.~Cornebise, K.~Kavukcuoglu, and D.~Wierstra.
\newblock Weight uncertainty in neural networks.
\newblock In \emph{International Conference on Machine Learning (ICML)}, 2015.

\bibitem[Botev et~al.(2017)Botev, Ritter, and Barber]{botev2017practical}
A.~Botev, H.~Ritter, and D.~Barber.
\newblock Practical {Gauss-Newton} optimisation for deep learning.
\newblock In \emph{International Conference on Machine Learning (ICML)}, 2017.

\bibitem[Bousquet and Bottou()]{bousquet2008tradeoffs}
O.~Bousquet and L.~Bottou.
\newblock The tradeoffs of large scale learning.
\newblock In \emph{Neural Information Processing Systems (NIPS)}.

\bibitem[Breiman(1996)]{breiman1996bagging}
L.~Breiman.
\newblock Bagging predictors.
\newblock \emph{Machine Learning}, 1996.

\bibitem[Buntine and Weigend(1991)]{buntine1991bayesian}
W.L. Buntine and A.S. Weigend.
\newblock {Bayesian} back-propagation.
\newblock \emph{Complex systems}, 1991.

\bibitem[Chow(1970)]{chow1970optimum}
C.~Chow.
\newblock On optimum recognition error and reject tradeoff.
\newblock \emph{IEEE Transactions on information theory}, 16\penalty0
  (1):\penalty0 41--46, 1970.

\bibitem[Cook(1986)]{cook1986assessment}
R.D. Cook.
\newblock Assessment of local influence.
\newblock \emph{Journal of the Royal Statistical Society. Series B
  (Methodological)}, 1986.

\bibitem[Courrieu(1994)]{courrieu1994three}
P.~Courrieu.
\newblock Three algorithms for estimating the domain of validity of feedforward
  neural networks.
\newblock \emph{Neural Networks}, 1994.

\bibitem[Efron and Tibshirani(1986)]{efron1986bootstrap}
B.~Efron and R.~Tibshirani.
\newblock Bootstrap methods for standard errors, confidence intervals, and
  other measures of statistical accuracy.
\newblock \emph{Statistical Science}, 1986.

\bibitem[Farabet et~al.(2013)Farabet, Couprie, Najman, and
  LeCun]{farabet2013learning}
C.~Farabet, C.~Couprie, L.~Najman, and Y.~LeCun.
\newblock Learning hierarchical features for scene labeling.
\newblock \emph{IEEE transactions on pattern analysis and machine
  intelligence}, 35\penalty0 (8):\penalty0 1915--1929, 2013.

\bibitem[Gal and Ghahramani(2016)]{gal2016dropout}
Y.~Gal and Z.~Ghahramani.
\newblock Dropout as a {Bayesian} approximation: Representing model uncertainty
  in deep learning.
\newblock In \emph{International Conference on Machine Learning (ICML)}, 2016.

\bibitem[Grandvalet et~al.(2009)Grandvalet, Rakotomamonjy, Keshet, and
  Canu]{grandvalet2009support}
Y.~Grandvalet, A.~Rakotomamonjy, J.~Keshet, and S.~Canu.
\newblock Support vector machines with a reject option.
\newblock In \emph{Advances in neural information processing systems}, pages
  537--544, 2009.

\bibitem[Graves(2011)]{graves2011practical}
A.~Graves.
\newblock Practical variational inference for neural networks.
\newblock In \emph{Neural Information Processing Systems (NIPS)}, 2011.

\bibitem[Guo et~al.(2017)Guo, Pleiss, Sun, and Weinberger]{guo2017calibration}
C.~Guo, G.~Pleiss, Y.~Sun, and K.Q. Weinberger.
\newblock On calibration of modern neural networks.
\newblock In \emph{International Conference on Machine Learning (ICML)}, 2017.

\bibitem[Hendrycks and Gimpel(2016)]{hendrycks2016baseline}
D.~Hendrycks and K.~Gimpel.
\newblock A baseline for detecting misclassified and out-of-distribution
  examples in neural networks.
\newblock In \emph{International Conference on Learning Representations}, 2016.

\bibitem[Herbei and Wegkamp(2006)]{herbei2006classification}
R.~Herbei and M.H. Wegkamp.
\newblock Classification with reject option.
\newblock \emph{Canadian Journal of Statistics}, 34\penalty0 (4):\penalty0
  709--721, 2006.

\bibitem[Hern{\'a}ndez-Lobato and Adams(2015)]{hernandez2015probabilistic}
J.M. Hern{\'a}ndez-Lobato and R.~Adams.
\newblock Probabilistic backpropagation for scalable learning of {Bayesian}
  neural networks.
\newblock In \emph{International Conference on Machine Learning (ICML)}, 2015.

\bibitem[Hern{\'a}ndez-Lobato et~al.(2016)Hern{\'a}ndez-Lobato, Li, Rowland,
  Hern{\'a}ndez-Lobato, Bui, and Turner]{hernandez2016black}
J.M. Hern{\'a}ndez-Lobato, Y.~Li, M.~Rowland, D.~Hern{\'a}ndez-Lobato, T.~Bui,
  and R.E. Turner.
\newblock Black-box $\alpha$-divergence minimization.
\newblock In \emph{International Conference on Machine Learning (ICML)}, 2016.

\bibitem[Hinton et~al.(2015)Hinton, Vinyals, and Dean]{hinton2015distilling}
G.~Hinton, O.~Vinyals, and J.~Dean.
\newblock Distilling the knowledge in a neural network.
\newblock \emph{arXiv preprint arXiv:1503.02531}, 2015.

\bibitem[Hooker(2004)]{hooker2004diagnosing}
G.~Hooker.
\newblock Diagnosing extrapolation: Tree-based density estimation.
\newblock In \emph{International Conference on Knowledge Discovery and Data
  Mining (KDD)}, 2004.

\bibitem[Huber and Ronchetti(2009)]{huber2009robust}
P.J. Huber and E.M. Ronchetti.
\newblock \emph{Robust Statistics}.
\newblock Wiley, 2009.

\bibitem[Kent(1982)]{kent1982robust}
J.T. Kent.
\newblock Robust properties of likelihood ratio tests.
\newblock \emph{Biometrika}, 1982.

\bibitem[Kim et~al.(2016)Kim, Khanna, and Koyejo]{kim2016examples}
B.~Kim, R.~Khanna, and O.O. Koyejo.
\newblock Examples are not enough, learn to criticize! criticism for
  interpretability.
\newblock In \emph{Neural Information Processing Systems (NIPS)}, pages
  2280--2288, 2016.

\bibitem[Kingma and Ba(2015)]{kingma2014adam}
D.P. Kingma and J.~Ba.
\newblock Adam: A method for stochastic optimization.
\newblock In \emph{International Conference on Learning Representations
  (ICLR)}, 2015.

\bibitem[Koh and Liang(2017)]{koh2017understanding}
P.W. Koh and P.~Liang.
\newblock Understanding black-box predictions via influence functions.
\newblock \emph{International Conference on Machine Learning (ICML)}, 2017.

\bibitem[Lakshminarayanan et~al.(2017)Lakshminarayanan, Pritzel, and
  Blundell]{lakshminarayanan2017simple}
B.~Lakshminarayanan, A.~Pritzel, and C.~Blundell.
\newblock Simple and scalable predictive uncertainty estimation using deep
  ensembles.
\newblock In \emph{Neural Information Processing Systems (NIPS)}, 2017.

\bibitem[Leonard et~al.(1992)Leonard, Kramer, and Ungar]{leonard1992neural}
J.A. Leonard, M.A. Kramer, and L.H. Ungar.
\newblock A neural network architecture that computes its own reliability.
\newblock \emph{Computers \& chemical engineering}, 1992.

\bibitem[Lipton et~al.(2018)Lipton, Chouldechova, and McAuley]{lipton2018does}
Z.C. Lipton, A.~Chouldechova, and J.~McAuley.
\newblock Does mitigating {ML}'s impact disparity require treatment disparity?
\newblock \emph{arXiv preprint arXiv:1711.07076}, 2018.

\bibitem[Liu et~al.(2018)Liu, Garrepalli, Dietterich, Fern, and
  Hendrycks]{liu2018open}
S.~Liu, R.~Garrepalli, T.G. Dietterich, A.~Fern, and D.~Hendrycks.
\newblock Open category detection with pac guarantees.
\newblock \emph{arXiv preprint arXiv:1808.00529}, 2018.

\bibitem[MacKay(1992)]{mackay1992practical}
D.~MacKay.
\newblock A practical {Bayesian} framework for backpropagation networks.
\newblock \emph{Neural Computation}, 1992.

\bibitem[Maclaurin et~al.(2015)Maclaurin, Duvenaud, and
  Adams]{maclaurin2015autograd}
D.~Maclaurin, D.~Duvenaud, and R.P. Adams.
\newblock Autograd: Effortless gradients in numpy.
\newblock In \emph{AutoML Workshop, ICML}, 2015.

\bibitem[Mahendran and Vedaldi(2015)]{mahendran2015understanding}
A.~Mahendran and A.~Vedaldi.
\newblock Understanding deep image representations by inverting them.
\newblock In \emph{IEEE Conference on Computer Vision and Pattern Recognition
  (CVPR)}, 2015.

\bibitem[Martens and Grosse(2015)]{martens2015optimizing}
J.~Martens and R.~Grosse.
\newblock Optimizing neural networks with {Kronecker}-factored approximate
  curvature.
\newblock In \emph{International Conference on Machine Learning (ICML)}, 2015.

\bibitem[Nushi et~al.(2018)Nushi, Kamar, and Horvitz]{nushi2018towards}
B.~Nushi, E.~Kamar, and E.~Horvitz.
\newblock Towards accountable {AI}: Hybrid human-machine analyses for
  characterizing system failure.
\newblock In \emph{Conference on Artificial Intelligence (AAAI)}, 2018.

\bibitem[Pearlmutter(1994)]{pearlmutter1994fast}
B.A. Pearlmutter.
\newblock Fast exact multiplication by the {H}essian.
\newblock \emph{Neural Computation}, 1994.

\bibitem[Platt(1999)]{platt1999probabilistic}
J.~Platt.
\newblock Probabilistic outputs for support vector machines and comparisons to
  regularized likelihood methods.
\newblock \emph{Advances in large margin classifiers}, 1999.

\bibitem[Qui\~{n}onero Candela et~al.(2009)Qui\~{n}onero Candela, Sugiyama,
  Schwaighofer, and Lawrence]{candela2009dataset}
J.~Qui\~{n}onero Candela, M.~Sugiyama, A.~Schwaighofer, and N.D. Lawrence.
\newblock \emph{Dataset Shift in Machine Learning}.
\newblock MIT Press, 2009.

\bibitem[Rasmussen and Williams(2006)]{rasmussen2006gaussian}
C.E. Rasmussen and C.K.I. Williams.
\newblock \emph{Gaussian processes for machine learning}.
\newblock the MIT Press, 2006.

\bibitem[Ribeiro et~al.(2016)Ribeiro, Singh, and Guestrin]{ribeiro2016should}
M.T. Ribeiro, S.~Singh, and C.~Guestrin.
\newblock Why should {I} trust you?: Explaining the predictions of any
  classifier.
\newblock In \emph{International Conference on Knowledge Discovery and Data
  Mining}, pages 1135--1144. ACM, 2016.

\bibitem[Ritter et~al.(2018)Ritter, Botev, and Barber]{ritter2018scalable}
H.~Ritter, A.~Botev, and D.~Barber.
\newblock A scalable {Laplace} approximation for neural networks.
\newblock 2018.

\bibitem[Saria and Subbaswamy(2019)]{saria2019tutorial}
S.~Saria and A.~Subbaswamy.
\newblock Tutorial: Safe and reliable machine learning.
\newblock In \emph{ACM Conference on Fairness, Accountability, and
  Transparency}, 2019.

\bibitem[Sch{\"o}lkopf et~al.(2000)Sch{\"o}lkopf, Williamson, Smola,
  Shawe-Taylor, and Platt]{scholkopf2000support}
B.~Sch{\"o}lkopf, R.C. Williamson, A.J. Smola, J.~Shawe-Taylor, and J.C. Platt.
\newblock Support vector method for novelty detection.
\newblock In \emph{Neural Information Processing Systems (NIPS)}, 2000.

\bibitem[Schulam and Saria(2017)]{schulam2017reliable}
P.~Schulam and S.~Saria.
\newblock Reliable decision support using counterfactual models.
\newblock In \emph{Neural Information Processing Systems (NIPS)}, 2017.

\bibitem[Soleimani et~al.(2018)Soleimani, Hensman, and
  Saria]{soleimani2018scalable}
H.~Soleimani, J.~Hensman, and S.~Saria.
\newblock Scalable joint models for reliable uncertainty-aware event
  prediction.
\newblock \emph{IEEE transactions on pattern analysis and machine
  intelligence}, 40\penalty0 (8):\penalty0 1948--1963, 2018.

\bibitem[Srivastava et~al.(2014)Srivastava, Hinton, Krizhevsky, Sutskever, and
  Salakhutdinov]{srivastava2014dropout}
N.~Srivastava, G.~Hinton, A.~Krizhevsky, I.~Sutskever, and R.~Salakhutdinov.
\newblock Dropout: A simple way to prevent neural networks from overfitting.
\newblock \emph{Journal of Machine Learning Research (JMLR)}, 2014.

\bibitem[Subbaswamy and Saria(2018)]{subbaswamy2018counterfactual}
A.~Subbaswamy and S.~Saria.
\newblock Counterfactual normalization: proactively addressing dataset shift
  using causal mechanisms.
\newblock In \emph{Uncertainty in AI (UAI)}, 2018.

\bibitem[Subbaswamy et~al.(2019)Subbaswamy, Schulam, and
  Saria]{subbaswamy2019learning}
A.~Subbaswamy, P.~Schulam, and S.~Saria.
\newblock Preventing failures due to dataset shift: Learning predictive models
  that transport.
\newblock In \emph{Artificial Intelligence and Statistics (AISTATS)}, 2019.

\bibitem[Tibshirani(1996)]{tibshirani1996comparison}
R.~Tibshirani.
\newblock A comparison of some error estimates for neural network models.
\newblock \emph{Neural Computation}, 1996.

\bibitem[Welling and Teh(2011)]{welling2011bayesian}
M.~Welling and Y.W. Teh.
\newblock Bayesian learning via stochastic gradient {Langevin} dynamics.
\newblock In \emph{International Conference on Machine Learning (ICML)}, 2011.

\bibitem[Wolpert(1992)]{wolpert1992stacked}
D.H. Wolpert.
\newblock Stacked generalization.
\newblock \emph{Neural Networks}, 1992.

\end{thebibliography}

\end{document}